\definecolor{vlgray}{gray}{0.92}
\definecolor{lgray}{gray}{0.88}
\definecolor{grey1}{rgb}{0.4,0.4,0.4}
\newcommand{\m}[1]{\mathit{#1}}
\newcommand{\p}[1]{\textit{#1}}
\begin{document}

\pagestyle{empty}

\mainmatter

\title{Solving WCSP \\by Extraction of Minimal Unsatisfiable Cores}

\titlerunning{Lecture Notes in Computer Science}

\author{Christophe Lecoutre \and Nicolas Paris \and Olivier Roussel \and S\'ebastien Tabary}

\institute{
CRIL - CNRS, UMR 8188, \\
Univ Lille Nord de France, Artois\\
F-62307 Lens, France\\
\{lecoutre,paris,roussel,tabary\}@cril.fr }

\maketitle

\begin{abstract}

Usual techniques to solve WCSP are based on cost transfer operations coupled with a branch and bound algorithm. 
In this paper, we focus on an approach integrating extraction and relaxation of Minimal Unsatisfiable Cores in order to solve this problem.
We decline our approach in two ways: an incomplete, greedy, algorithm and a complete one.

\end{abstract}

\section{Introduction}

The Constraint Satisfaction Problem (CSP) consists in determining if a Constraint Network (CN), also called a CSP instance, is satisfiable or not.
It is a decision problem, the goal of which is to find a solution, i.e., a complete instantiation of the variables of the CN that satisfies each constraint.
Sometimes, preferences among solutions need to be expressed. This is possible through the introduction of soft constraints. 
The framework called WCSP (Weighted CSP) allows us to handle such constraints: each soft constraint is defined by a cost function that associates a violation degree, called cost, with every possible instantiation of a subset of variables.
The WCSP goal is 
to find a complete instantiation with a minimal combined cost of the soft constraints.

Most of the current methods to solve Weighted CNs (WCNs) are based on branch and bound tree search together with the use of soft local consistencies 
for estimating minimal costs of sub-problems during search.
These local consistencies, e.g., AC* \cite{L_node,LS_solving}, FDAC \cite{C_reduction}, EDAC \cite{GHZL_existential}, VAC \cite{CGSSZ_virtal}, and OSAC \cite{CGSSZW_soft}, have been developed on the concept of equivalence-preserving transformations (cost transfer operations).
Because they must combine costs, algorithms establishing soft local consistencies are often more complex than their CSP counterparts.
In this paper, we propose an original approach for WCSP that consists in solving a WCN by iteratively  generating and solving classical CNs. 
One advantage is to benefit from efficient CSP solvers developed in the community for more than two decades. 
Notice that reformulating WCNs into CNs has already been used quite successfully to enforce VAC \cite{CGSSZ_virtal}, where (classical) arc consistency is established on a CN generated from a WCN in order to identify some relevant cost transfer operations.

In our approach, to solve a WCN, soft constraints are converted into hard constraints by retaining only the tuples that have a specified cost.
Actually, the principle is to enumerate a sequence of CNs according to an increasing cost order related to the WCN. 
When a CN is proved to be unsatisfiable, a Minimal Unsatisfiable Core (MUC) is extracted in order to identify the soft constraints whose reformulations into hard constraints need to be relaxed, so as to finally obtain a solution.
This approach is declined in two versions: a complete algorithm and a greedy one (that is incomplete). 
Note that related approaches have been successfully exploited for MaxSAT \cite{ABL_NAWPM,FM_SPMP,SP_AMSUC}.

This paper is organized as follows. After some usual definitions, we present the principle of our approach and the required data structures.
Next, we completely describe a complete version of our approach, as well as a greedy one.
Finally, after presenting some practical results, we conclude.

\section{Technical Background \label{sec:technical}}

\subsection{Generalities}

A {\em constraint network} (CN) $P$ involves a finite set of $n$ variables denoted by $vars(P)$ and a finite set of $e$ constraints denoted by $cons(P)$.
Each variable $x$ has an associated domain, denoted by $dom(x)$, which contains the finite set of values that can be assigned to $x$; the initial domain of $x$ is denoted by $dom^{init}(x)$.
Each (hard) constraint $c$ involves an ordered set of variables, denoted $scp(c)$ and called {\em scope} of $c$, and is defined by a relation containing the set of tuples allowed for the variables of $scp(c)$.
The arity of a constraint $c$ is $|scp(c)|$.
An {\em instantiation} $I$ of a set $X = \{x_1,\dots,x_p\}$ of variables is a set $\{(x_1,a_1)$, $\dots$, $(x_p,a_p)\}$ such that $\forall i \in 1..p, a_i \in dom^{init}(x_i)$. 
$I$ is {\em valid} on $P$ iff $\forall (x,a) \in I, a \in dom(x)$. 
A solution is a complete instantiation of $vars(P)$ (i.e., the assignment of a value to each variable) such that all the constraints are satisfied.
$P$ is satisfiable iff it admits at least one solution.
For more information on constraint networks, see \cite{D_book,L_constraint,RBW_handbook}.

An unsatisfiable core of $P$ is an unsatisfiable subset of constraints of $P$.
A core $C$ is a MUC (Minimal Unsatisfiable Core) iff each strict subset of constraints of $C$ is satisfiable.
Several methods to extract MUCs of constraint networks are presented in \cite{SP_explanation,J_preferred,HLSB_extracting}.
Among the different versions presented in \cite{HLSB_extracting}, we have chosen for our implementation the \textit{dichotomic} version, called \textit{dcMUC}.

A {\em weighted constraint network} (WCN) $W$ involves a finite set of $n$ variables denoted by $vars(W)$ and a finite set of $e$ soft constraints denoted by $cons(W)$, and has an associated value $k > 0$ which is either a natural integer or $+ \infty$.
Each soft constraint $w \in cons(W)$ has a scope $scp(w)$ and is defined as a cost function from $l(scp(w))$ to $\{0,\dots,k\}$, where $l(scp(w))$ is the Cartesian product of the domains of the variables involved in $w$; for each instantiation $I \in l(scp(w))$, the cost of $I$ in $w$ is denoted by $w(I)$.

When an instantiation is given the cost $k$ it is said {\em forbidden}. Otherwise, it is permitted with the corresponding cost (0, being completely satisfactory). 

Costs are combined with the bounded addition $\oplus$ defined as: 
$$\forall a,b \in \{0,\dots,k\}, a \oplus b = min(k,a+b)$$

The objective of Weighted Constraint Satisfaction Problem (WCSP) is, for a given WCN, to find a complete instantiation with a minimal cost.
For more information on weighted constraint networks, see \cite{BMRSVF_semiring,MRS_soft}.

Different variations of soft arc consistency for WCSP have been proposed during the last decade: AC* \cite{L_node,LS_solving}, full directional arc consistency (FDAC) \cite{C_reduction}, existential arc consistency (EDAC) \cite{GHZL_existential}, virtual arc consistency (VAC) \cite{CGSSZ_virtal} and optimal soft arc consistency (OSAC) \cite{CGSSZW_soft}.
All the algorithms proposed to enforce these different levels of consistency use cost transfer operations (based on the concept of equivalence-preserving transformations) such as unary projection, projection and extension.

\subsection{Layers and Fronts}

In this paper, we focus on extensional soft constraints. 
These are soft table constraints where some tuples are explicitly listed with their costs in a table, and a default cost indicates the cost of all implicit tuples (i.e., those not present in the table) ; e.g., see \cite{LPRT_propagating}.
Below, we introduce both the notions of layer and front.

All tuples of a soft table constraint having the same cost can be grouped to form a subset called {\em layer}.
The cost of any tuple from layer $L$ is given by $cost(L)$.
A particular layer corresponds to the default cost: this layer contains no tuple, but implicitly represents all tuples that do not explicitly appear elsewhere (in another layer).
The number of layers for a constraint $w$ is given by $nbLayers(w)$.
Within a constraint, we consider that layers are increasingly ordered according to their costs.
Finally, for the sake of simplicity, we shall use indices, from 0 to $nbLayers(w)-1$, to identify the different layers of a constraint, and when the context is unambiguous, we shall not distinguish between indices and the layers they represent.

A {\em front} $f$ is a function that maps each constraint of a WCN to one of its layers. 
A front represents a kind of border between the layers that will be considered (allowed) at a given moment, and the layers that will be discarded (forbidden).
We shall use an array notation for the fronts. 
So, $f[w]$ will represent the layer associated with the constraint $w$ in the front $f$. 
The cost of a front $f$, $cost(f)$, is obtained by summing up all costs of the constraint layers corresponding to the front $f$~: 
$$cost(f)=\sum_{w\in cons(W)}cost(f[w])$$

A front $f'$ is the direct successor of a front $f$ iff there exists a constraint $w_i$ such that $f'[w_i]=f[w_i]+1$ and $\forall j \ne i, f'[w_j]=f[w_j]$.
We note $f\rightarrow f'$ iff $f'$ is a direct successor of $f$ and $f\rightarrow_* f'$ iff there exists a sequence of relations $f\rightarrow f_1, f_1\rightarrow f_2,\ldots f_n\rightarrow f'$ (transitive closure).

Because the layers of any constraint are increasingly ordered following their costs, we deduce that:
$$f\rightarrow_* f'\Rightarrow cost(f)<cost(f')$$

We can observe that the set of all possible fronts and the direct successor relation form a lattice structure, where the lowest front (also noted $\bot$) associates with each constraint its first layer (of lowest cost) and the highest front (also noted $\top$) associates with each constraint its last layer (of highest cost).

\section{General Principle}

Figure \ref{fig:schema} illustrates the main idea of our approach for solving weighted constraint networks.
First, from a WCN $W$, an initial constraint network $P$ is built as follows:
for each constraint of $W$, the tuples occurring in the layer of minimal cost are considered as allowed in $P$, while other tuples are considered as forbidden (function $\m{toCN}$).
This process is similar to that proposed in \cite{CGSSZ_virtal}.
Next, the CN $P$ is solved using a constraint solver (through a call to the function $\m{solveCN}$).
If $P$ is satisfiable, then a solution is returned.
Otherwise a MUC is extracted from $P$ (through a call to the function $\m{extractMUC}$) and then exploited either in a greedy or complete version.

In the greedy version, some constraints of the MUC will be successively relaxed until satisfiability of the MUC is restored (through a call to the function $\m{relax}$).
More precisely, a subset of layers of some constraints belonging to the MUC will be switched from the ``forbidden'' status to the ``allowed'' one.
When constraints corresponding to the MUC are relaxed so as to become satisfiable, the process loops until the global satisfiability of $P$ is reached.
In this case, a solution is returned.

In the complete version, once a MUC is identified, we insert in a priority queue fronts that represent all possible ways of relaxing the MUC. 
Fronts are extracted from the priority queue, translated to a CN, and successively solved until the occurrence of a satisfiable constraint network.
In this case, an optimum solution is returned.

\begin{figure}
  \centerline{\includegraphics[width=6cm]{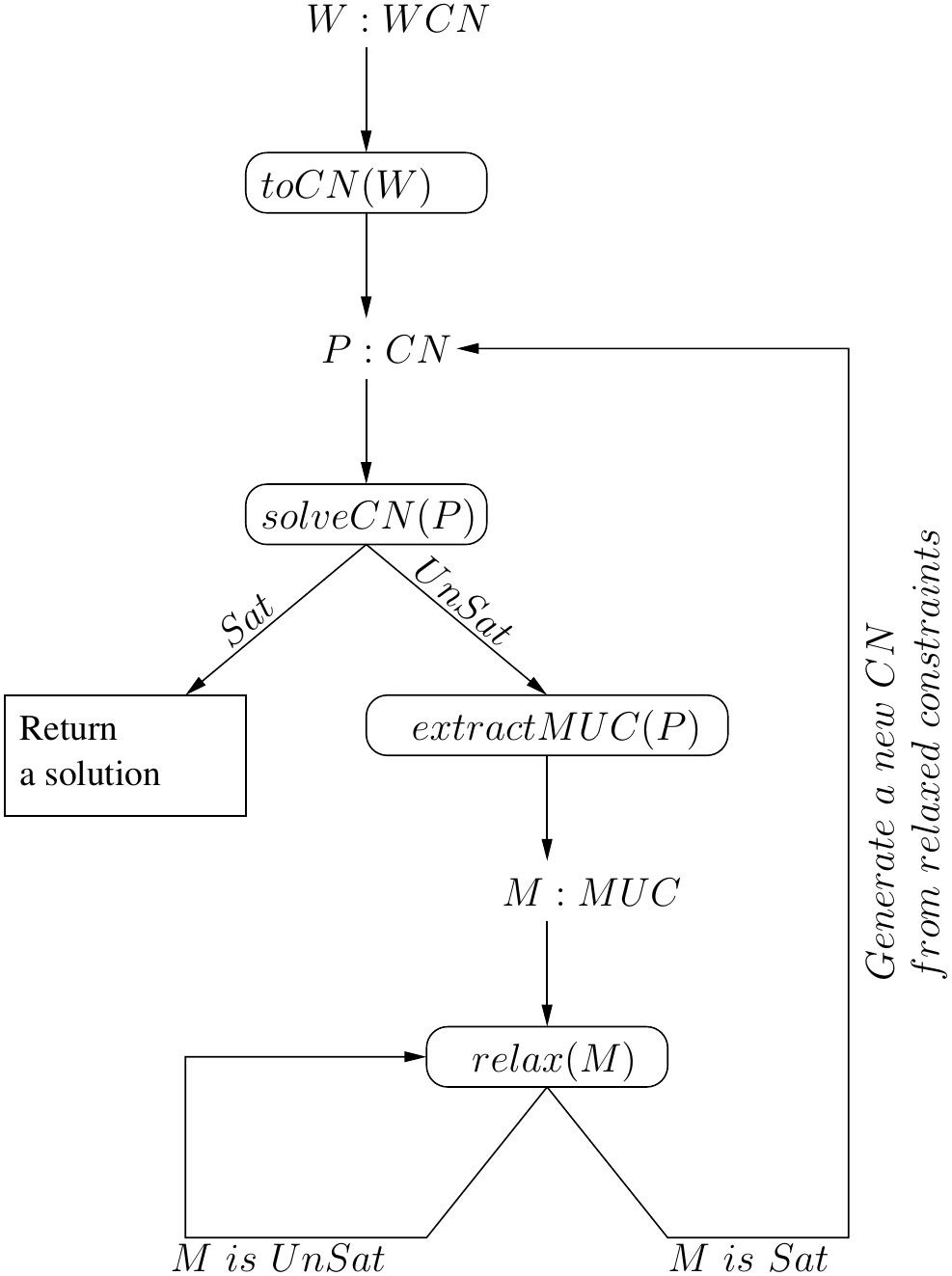}}
  \caption{\label{fig:schema} Principle of the iterative relaxation of MUCs.}
\end{figure}

Interestingly, the principle described above can be generalized to take into account any kind of constraint: hard constraints from the CSP framework, ``violable '' constraints from the Weighted Max-CSP framework and soft constraints from the WCSP framework.
When the function $\m{toCN}$ is called, we just need to indicate the translation process for each kind of constraints.
For a hard constraint, no transformation is required (we have only one layer with a cost equal to $0$).
For a ``violable'' constraint, two layers are managed.
The first layer, with a cost equal to $0$, corresponds to the satisfaction of the ``violable'' constraint.
The second one, with a cost equal to the weight of the  ``violable'' constraint, corresponds to the violation of the constraint.
In other words, our approach allows us to deal easily and naturally with these different frameworks. 
Due to lack of space, this is not further described in this paper.

\section{Data structures and Functions}

Fronts are exploited by the function $\m{toCN}$ which builds a CN from a WCN. 
Two versions are considered.
The first one, noted $\m{toCN}_=(W,f)$, builds a CN by selecting as allowed tuples for a soft constraint $w$ those belonging to the layer $f[w]$ ; this layer is said allowed.
The second one, noted $\m{toCN}_\le(W,f)$, builds a CN by selecting as allowed tuples for a soft constraint $w$ the tuples of the layers whose index is less than or equal to $f[w]$ ; these layers are said allowed.
In other words, for each hard constraint $c$ built from a soft constraint $w$ of $W$, the tuples allowed in $c$ are those having a cost equal (resp., less than or equal) to the cost of the layer $f[w]$.

In practice, two representations of a hard constraint can be envisionned.
On the one hand, when the layer corresponding to the default cost is not allowed, $\m{toCN}$ generates a hard positive constraint.
The tuples of the hard constraint are allowed and correspond to the tuples of the allowed layers of the soft constraint.
On the other hand, when the layer corresponding to the default cost is allowed, $\m{toCN}$ generates a hard negative constraint.
The tuples of the hard constraint are forbidden and correspond to the tuples of the forbidden layers of the soft constraint in order to avoid to enumerate the implicit tuples having a cost equal to the default cost.

Figure \ref{fig:structure} presents two constraints $w_0$ and $w_1$ with their layers.
Constraint $w_0$ has a set of layers numbered from $0$: layer $0$ of minimal cost $cost_0$ contains the tuples $\tau_1$, $\tau_5$, and $\tau_7$, and the next layer (layer $1$) of cost $cost_1$ contains the tuples $\tau_2$ and $\tau_3$. 
In this example, $f[w_0]$ is equal to $1$ and $f[w_1]$ is equal to $0$.
The CN returned by $toCN_=(W,f)$ contains a constraint $c_0$ (associated with $w_0$) with allowed tuples $\tau_2$ and $\tau_3$ and a constraint $c_1$ (associated with $w_1$)  with allowed tuples $\tau_1$ and $\tau_4$. 
The CN returned by $toCN_{\le}(W,f)$ contains a constraint $c_0$ (associated with $w_0$) with allowed tuples $\tau_2$, $\tau_3$, $\tau_1$, $\tau_5$, and $\tau_7$ and a constraint $c_1$ (associated with $w_1$)  with allowed tuples $\tau_1$ and $\tau_4$. 
Allowed layers of $toCN_{\le}(W,f)$ are identified by a grey background in Figure \ref{fig:structure}.


We conclude this section by a short description of the other functions we need.
The function $\m{solveCN(P)}$ solves a CN given in parameter and returns either a solution or $\bot$ if the CN is unsatisfiable.
The function $\m{extractMUC(P)}$ returns a MUC from an unsatisfiable CN $P$ given in parameter.
The function $\m{cons}(W,M)$ returns the set of soft constraints of the WCN $W$ which correspond to those of the MUC $M$ (by construction any hard constraint is associated with a soft constraint).
The function $\m{restrict}(W,M)$ returns a WCN containing only the constraints of the WCN $W$ corresponding to the constraints present in the MUC $M$.

\begin{figure}
   \centerline{ \includegraphics[width=5.5cm]{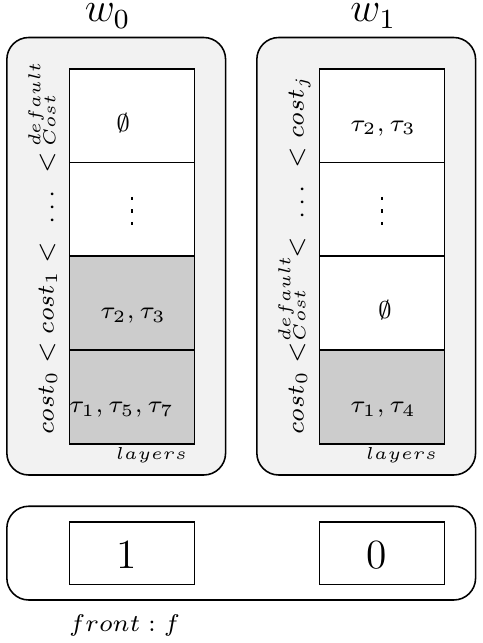}}
  \caption{\label{fig:structure} Description of data structures.}
\end{figure}


\section{Algorithms}

\subsection{Introductory Complete Version}

To facilitate the understanding of our approach, we present a first complete algorithm which does not exploit MUCs and thus is not efficient.
Function \ref{csnomuc} starts with a first front which retains only layer 0 of each constraint (lines $1$-$3$).
Next, while there exists a front to explore, we extract from a priority queue the front with the minimal cost.
The cost of a front is easily computed by summing the costs associated with the layers selected in each soft constraint. 
Then, we can build with the function $\m{toCN}_=$ a CN that we try to solve with $\m{solveCN}$.
If a solution is found, it is optimal and the algorithm stops.
Indeed, the algorithm enumerates fronts in increasing cost order (see Proposition \ref{queueproperty}) and guarantees that each previous front was unsatisfiable.
If the tested CN was unsatisfiable, we enumerate the direct successors of the current front (lines $11$-$16$) and insert them in the priority queue.
Obviously, we have to check that we are not exceeding the maximal layer for a given constraint.
Besides, fronts having a cost equal to $k$ must be ignored.

\begin{function}[h!]
  \ForEach{$w\in \m{cons}(W)$} {
    $f[w]\gets 0$\;
  }
  $Q \gets \{f\}$ \;
  \While{$Q\ne\varnothing$} {
    $f \gets$ pick and delete least cost front in $Q$ \;
    $P \gets \m{toCN}_=(W,f)$ \;
    $sol\gets \m{solveCN}(P)$ \;
    \eIf{$sol\ne\bot$} {
      \Return sol
    }
    {
      \ForEach{$w\in \m{cons}(W)$} {
        \If{$f[w]\ne nbLayers(w)-1$} {
          $f' \gets f$ \;
          $f'[w] \gets f'[w] + 1$ \;
          \If{$\m{cost}(f')\ne k$} {         
            $Q \gets Q \cup f'$ \; 
          }
        }
      }
    }
  }
  \caption{completeSearchNoMUC($W$: WCN)}
  \label{csnomuc}
\end{function}

The priority queue $Q$ used in this algorithm is a variant of the
usual priority queues because it must ensure that fronts are extracted
in increasing cost order but also that a given element is not inserted
twice in the queue.  For example, if we consider a network composed of
two constraints each having at least two layers, the first front will
be the array $<0,0>$. During the first iteration, we shall insert in
$Q$ the neighbors $<1,0>$ and $<0,1>$. When these fronts are
extracted from the queue, they both generate the same neighbor $<1,1>$
that must be inserted only once in $Q$ to avoid redundant
computations.  This particular queue is obtained by slightly modifying
a classical implementation of a priority queue with a binomial heap.

One can easily check that Algorithm \ref{csnomuc} enumerates all
possible fronts when $\m{solveCN}$ never finds a solution.  Indeed, if
we ignore costs, this algorithm performs a breadth-first search
enumerating fronts. Proposition \ref{queueproperty} also
guarantees that this algorithm does not loop endlessly and that fronts
are enumerated in increasing cost order.

\begin{proposition}
\label{queueproperty}
The following properties are guaranteed by \ref{csnomuc}: 
\begin{itemize}
\item[A)] fronts are enumerated in increasing cost order;
\item[B)] once a front $f$ is extracted from the priority queue, it can never be inserted again.
\end{itemize}
\end{proposition}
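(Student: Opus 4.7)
The plan is to prove $(A)$ by induction on the number of extractions, and then to derive $(B)$ as a short consequence of $(A)$. Both arguments rest on a single elementary fact already made explicit in the excerpt: if $f \rightarrow f'$ then $cost(f) < cost(f')$. Indeed, $f'$ is obtained from $f$ by replacing one layer of some constraint by the next one, which is strictly more expensive since layers of a given constraint are ordered by increasing cost.

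For $(A)$, let $f_1, f_2, \ldots$ denote the sequence of fronts successively extracted from $Q$. I would prove by induction on $k$ that $cost(f_k) \le cost(f_{k+1})$. When $f_k$ is picked, it is a minimum of the current queue, so every front still waiting in $Q$ has cost at least $cost(f_k)$. The algorithm then inserts some direct successors of $f_k$; each of these has cost strictly greater than $cost(f_k)$ by the monotonicity fact, and the filter discarding successors of cost equal to $k$ can only remove candidates, never lower the minimum. Hence the next extracted front $f_{k+1}$ has cost at least $cost(f_k)$, which gives $(A)$ --- with ``increasing'' understood as non-decreasing, since two incomparable fronts may share the same cost.

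For $(B)$, suppose for contradiction that some front $f$ is extracted at iteration $i$ and inserted into $Q$ again at some later iteration $j > i$. Insertions only take place in the \textbf{else} branch, as direct successors of the front $g$ extracted at the beginning of the current iteration; hence there exists a front $g$, extracted at some iteration in $\{i+1, \ldots, j\}$, with $g \rightarrow f$. The monotonicity fact yields $cost(f) > cost(g)$, while $(A)$ applied to $f$ and $g$ (extracted in that order) yields $cost(g) \ge cost(f)$, a contradiction.

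The only point that needs a bit of care is conceptual rather than combinatorial: one must read ``increasing'' as non-decreasing (since incomparable fronts may share a cost), and one must distinguish the uniqueness property of the modified priority queue (which only prevents a front from being simultaneously present twice in $Q$) from the strictly stronger statement $(B)$, which in fact follows from the algorithm's cost monotonicity rather than from the data structure itself.
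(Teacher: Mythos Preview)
Your proof is correct and, for part (A), essentially the same as the paper's: both rest on the observation that every element remaining in the queue after an extraction has cost at least that of the extracted front, and that every newly inserted successor has strictly larger cost. The paper phrases this as a two-case analysis for an arbitrary pair of extracted fronts (either $f_2$ was already in the queue, or it descends via $\rightarrow_*$ from something that was), while you do the consecutive-step induction; these are interchangeable.

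For part (B) there is a genuine, if small, difference. The paper's argument reuses the case split from (A): assuming $f_1=f_2$ are both extracted, it invokes the no-duplicate property of the modified priority queue to exclude case (A.1) and land in case (A.2), which yields the strict inequality and hence the contradiction. Your argument bypasses this: you observe that any re-insertion of $f$ must occur as a direct successor of some later-extracted $g$, so $cost(g)<cost(f)$, contradicting (A). This is cleaner in two respects. First, it addresses the statement of (B) directly (re-\emph{insertion}), whereas the paper's argument is literally about re-\emph{extraction} and only implicitly covers insertion. Second, as you point out, your argument does not use the queue's uniqueness guarantee at all; the paper's does. So your route shows that (B) is really a consequence of cost monotonicity alone, independent of the data-structure tweak.
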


\begin{proof}
Let $f_1$ be a first front extracted from the queue and $f_2$ a front
extracted from the queue after the extraction of $f_1$.

A) Two cases are possible. Either (A.1) $f_2$ was already present in the queue when $f_1$ has been extracted or (A.2) $f_2$ has been inserted in the queue after the extraction of $f_1$.
In case (A.1), the priority queue ensures that $cost(f_1)\le cost(f_2)$. In case (A.2), $f_2$ comes from a front $f$ such that $f\rightarrow_* f_2$ and either $f=f_1$ or $f$ 
was present in the queue when $f_1$ has been extracted. In both cases, $cost(f_1)\le cost(f)$. As $f\rightarrow_* f_2\Rightarrow cost(f)< cost(f_2)$, we conclude that $cost(f_1)<cost(f_2)$.
Therefore, we obtain that in any case $cost(f_1)\le cost(f_2)$.

B) Let us assume that $f_1=f_2$. As the queue ensures that it never contains two identical fronts at a given time, we conclude that we are necessary in case (A.2).
However, we proved that in this case, $cost(f_1)< cost(f_2)$, which contradicts $f_1=f_2$. \hfill $\qed$
\end{proof}

It should be emphasized that in our approach, unary constraints are considered as normal constraints. Moreover, because only one layer of a constraint is considered at a given time, the CN generated
are typically much smaller than the original WCN. One can expect that checking the satisfiability will be quite simple (hence fast) in most cases.
One last observation here is that this algorithm exploits a form of abstraction considering that, from a global point of view, there is no reason to distinguish tuples having the same cost. 

To conclude this section, the drawback of this first version is the enumeration of all possible combinations of layers. In the worst-case, it enumerates a number of fronts equal to the product
of the number of layers of each constraint, which corresponds to $\prod_{w\in cons(W)}nbLayers(w)$. Depending on the number of variables and constraints, this complexity may be much greater compared to a 
classical branch and bound approach (i.e., when $\prod_{w\in cons(W)}nbLayers(w) \ggg \prod_i|dom(x_i)|$). This is explained by the fact that, in our approach, the same instantiation can be
explored several times in different tests of satisfiability. Nevertheless, even in this case, this approach can be effective if the optimum is low.

\subsection{Exploiting MUCs}

The complexity of the previous algorithm can be reduced considerably in practice by noticing that it is useless to move from a layer to the next one for a constraint which does not participate in the unsatisfiability of
the CN. The idea is therefore to identify a MUC and to move to the next layer only for the constraints belonging to this MUC.

In the worst-case, the MUC extracted contains all the constraints of the problem and therefore the complexity in the worst-case remains unchanged. However in practice, on concrete instances,
MUCs are often small. Hence, the generated neighbourhood is much smaller and the practical complexity is significantly reduced.

\subsubsection{Complete Approach}

Function \ref{cs} (inspired by \cite{ABL_NAWPM,FM_SPMP,SP_AMSUC}) presents the algorithm modified in order to take profit of MUCs in a complete approach. The first steps of the algorithm
are identical to those of \ref{csnomuc}. When the current CN is unsatisfiable, a MUC is identified, and we shall progressively allow the higher layers in the constraints of the MUC.
More precisely, the algorithm enumerates all the possible ways to relax this MUC. For each constraint of the MUC (obtained by a call to $\m{cons}(W,M)$), the algorithm generates a successor $f'$
from the current front which differs only by the incrementation of the allowed layer for this constraint. This new front is inserted in $Q$ at a position depending on the value of
$cost(f')$.

\begin{function}
  \ForEach{$w\in \m{cons}(W)$} {
    $f[w]\gets 0$\;
  }
  $Q \gets \{f\}$ \;
  \While{$Q \ne \varnothing$}  {
    $f \gets$ pick and delete least cost front in $Q$ \;
    $P \gets \m{toCN}_=(W,f)$ \;
    $sol \gets \m{solveCN}(P)$ \;
    \eIf{$sol \neq \bot$} {
      \Return sol 
    }
    {
      $M \gets \m{extractMUC}(P)$ \;
      \ForEach{$w\in \m{cons}(W,M)$} {
        \If{$f[w]\ne nbLayers(w)-1$} {
          $f' \gets f$ \;
         $f'[w] \gets f'[w] + 1$ \;
          \If{$\m{cost}(f')\ne k$} {         
            $Q \gets Q \cup \{f'\}$ \;
          }           
        }
      }
    }
  }

  \caption{completeSearch($W$: WCN)}
  \label{cs}

\end{function}

\begin{proposition}
\label{treillis}
Algorithm \ref{cs} is complete.
\end{proposition}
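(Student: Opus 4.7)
The plan is to show that whenever the WCN $W$ admits a complete instantiation of cost $<k$, Algorithm \ref{cs} terminates by returning an optimal one. First, I would observe that the proof of Proposition \ref{queueproperty} applies verbatim to Algorithm \ref{cs}, because the priority queue is managed identically and the inserted fronts are still direct successors of extracted ones. Hence fronts are extracted from $Q$ in strictly increasing cost order and none is extracted twice; in particular, the first satisfiable front ever extracted minimises the cost among all fronts inserted into $Q$. The statement therefore reduces to showing that a minimum-cost satisfiable front $f^\star$ (whose cost equals the WCSP optimum and is strictly less than $k$) eventually enters $Q$.

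The crux is the following \emph{MUC persistence} property: if $\m{toCN}_=(W,f)$ is unsatisfiable and $M=\m{extractMUC}(\m{toCN}_=(W,f))$, then for any front $g$ with $g[w]=f[w]$ for every $w\in \m{cons}(W,M)$, the hard constraints derived from $\m{cons}(W,M)$ are identical in $\m{toCN}_=(W,g)$ and in $\m{toCN}_=(W,f)$; hence $M$ remains an unsatisfiable subset of the former, so $g$ itself is unsatisfiable. Contrapositively, if $g$ is satisfiable then $g$ must disagree with $f$ on the layer of at least one constraint of $\m{cons}(W,M)$.

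I would then construct, by induction on $i$, a chain $\bot=f_0\to f_1\to\cdots\to f_k=f^\star$ such that every $f_i$ is inserted into $Q$ and $f_i\le f^\star$ component-wise. The base case $f_0=\bot$ holds by line~3. For the inductive step, assume $f_i\ne f^\star$ with $f_i\in Q$ and $f_i\le f^\star$; then $\m{cost}(f_i)<\m{cost}(f^\star)$, and by minimality of $f^\star$ every front of cost at most $\m{cost}(f_i)$ is unsatisfiable. The algorithm therefore cannot return before extracting $f_i$, and, upon extraction, computes a MUC $M_i$. Applying MUC persistence to $f_i$ with $g=f^\star$ and using $f_i\le f^\star$, there must exist $w^{(i)}\in \m{cons}(W,M_i)$ with $f^\star[w^{(i)}]>f_i[w^{(i)}]$; otherwise $M_i$ would persist in $f^\star$, contradicting its satisfiability. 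The successor $f_{i+1}$ obtained by incrementing $f_i$ at $w^{(i)}$ then satisfies $f_{i+1}\le f^\star$, and $\m{cost}(f_{i+1})\le \m{cost}(f^\star)<k$, so the algorithm inserts $f_{i+1}$ into $Q$. Since $d(f_i)=\sum_w f_i[w]$ strictly increases and is bounded by $d(f^\star)$, the chain reaches $f^\star$ in finitely many steps.

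Once $f^\star\in Q$, the increasing-cost extraction order guarantees that some satisfiable front of cost at most $\m{cost}(f^\star)$ is extracted before $Q$ empties; by minimality that cost equals $\m{cost}(f^\star)$, so the returned solution is optimal. The main obstacle is the inductive step: the pruning performed by Algorithm \ref{cs} (only incrementing MUC constraints) is safe only because, whichever MUC the extractor happens to return at $f_i$, MUC persistence forces at least one of its constraints to admit further incrementing on the way to $f^\star$, which is exactly the property established above.
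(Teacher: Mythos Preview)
Your proof is correct and follows essentially the same approach as the paper's: both argue that the queue always contains some front leading (via $\rightarrow_*$) to a solution front, using the key observation---what you call ``MUC persistence''---that a satisfiable front must differ from the current unsatisfiable front on at least one constraint of the extracted MUC. Your explicit chain construction to $f^\star$ is a more detailed rendering of the paper's invariant-based argument; the minor slip that Proposition~\ref{queueproperty} gives only non-decreasing (not strictly increasing) extraction order is harmless, since your argument never actually uses strictness.
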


\begin{proof}
To simplify the proof, we shall use the term  ``front'' to refer implicitly to the CN associated with a front $f$ by $\m{toCN}_=(W,f)$.
Let $f_u$ be the front which is identified as unsatisfiable and $M$ the MUC extracted from this front. The only difference between \ref{cs} and \ref{csnomuc} is
that when a MUC is identified, we restrict the search by relaxing this MUC prior to any other operation.

If the WCN has no solution, restricting the search in this way obviously does not cause the loss of any solution. We can therefore only consider the case where the CN admits at least one solution $S$.
Let $f_S$ be the front corresponding to a solution $S$. The algorithm \ref{csnomuc} ensures that there always exists a front $f$ in the priority queue such that $f\rightarrow_*f_S$.
This is especially true for the first front inserted in the queue $\bot$ and when this property is satisfied for a front extracted from the queue, it is satisfied for at least one of its
direct successors (by definition of $\rightarrow_*$). Therefore, by recurrence, this property is always ensured.

If $f_u\not\rightarrow_* f_s$, the proposed restriction does not result in the loss of solution $S$. Therefore, we can further assume that $f_u\rightarrow_* f_s$. Hence, $\forall w \in cons(W), f_s[w]\ge f_u[w]$.
Moreover, there necessarily exists a constraint $w_i\in cons(W,M)$ such that $f_s[w_i]> f_u[w_i]$, otherwise $f_S$ will still be unsatisfiable. Let $f'$ be the front defined by $f'[w_i]=f_u[w_i]+1$
and $\forall w \in cons(W)$ such that $w\ne w_i$, $f'[w]=f_u[w]$. $f'$ is a front which is generated by the algorithm and by construction $f'\rightarrow_* f_s$. \hfill $\qed$
\end{proof}

Figure \ref{contreExempleRelachementUnique} presents an example where one cannot simply relax a MUC in only one way, even if it was locally the optimal relaxation, without losing the optimal solution.
In this example, a WCN is composed of three constraints $w_x$, $w_{xy}$ and $w_y$. The first front selects the layers having a cost equal to 0, and therefore only retains  
$\{(x,a)\}$, $\{(x,a),(y,b)\}$ and $\{(y,a)\}$ as allowed tuples. Of course, in this case, the constraints $w_{xy}$ and $w_y$ constitute a MUC.
There are two ways to relax this first MUC. The first relaxation switches to the next layer of $w_{xy}$ (it is locally the best choice since it has a cost of 5) whereas the second one switches to the next layer of $w_y$.
Considering the first case, the new front obtained retains $\{(x,a)\}$, $\{(x,c),(y,a)\}$ and $\{(y,a)\}$ as allowed tuples. This time, the MUC contains the constraints $w_x$ and $w_{xy}$.
We can either relax $w_{xy}$ to obtain a solution with a cost of 100, or relax $w_x$ to obtain another MUC that can be relaxed in two different ways to obtain either a solution
having a cost of 105, or a solution having a cost of 110.

The second relaxation of the first MUC consists in switching to the
next layer of $w_y$. In this case, we directly obtain a solution
having a cost of 10, which is the optimum.
This clearly shows that all relaxations of a MUC are required.

\begin{figure}[h]
\begin{center}
  \setlength{\tabcolsep}{3pt}
  \small
  \subfigure[$w_x$]
  {
    \begin{tabular}{c|c}
      cost & tuples \\
      \hline
      \hline
      100 & \{(x,c)\} \\
      \hline
      10 & \{(x,b)\} \\
      \hline
      0 & \{(x,a)\} \\
    \end{tabular}
  }
  \subfigure[$w_{xy}$]
  {
    \begin{tabular}{c|c}
      cost & tuples \\
      \hline
      \hline
      100 & default \\
      \hline
      5 & \{(x,c),(y,a)\} \\
      \hline
      0 & \{(x,a),(y,b)\} \\
    \end{tabular}
  }
  \subfigure[$w_y$]
  {
    \begin{tabular}{c|c}
      cost & tuples \\
      \hline
      \hline
      100 & \{(y,c)\} \\
      \hline
      10 & \{(y,b)\} \\
      \hline
      0 & \{(y,a)\} \\
    \end{tabular}
  }
\end{center}
  \caption{Example where it is required to enumerate all relaxations of a MUC to reach the optimum.}
  \label{contreExempleRelachementUnique}
\end{figure}

\subsubsection{Greedy Approach}

In this part, we present an incomplete version of our approach
based on the identification and relaxation of MUCs in order to solve
WCNs.

From a WCN $W$, Function \ref{alg:gmr} returns a solution for a CN derived from $W$.
First the structure $front$ is initialised to $0$ which corresponds to the first layer of each constraint of the WCN.

From a WCN $W$ and a front $f$, we extract a CN and this network is then solved.
If a solution is found, then it is returned by the function \ref{alg:gmr} (line $7$).
If the network has been proved unsatisfiable, it is necessary to relax some constraints of the WCN.
To achieve this, the algorithm extracts a WCN $W'$ from a computed MUC (lines $9$ and $10$).
The front $f$ is then updated by Function \ref{alg:h2relax} (line $11$). This procedure will relax one (or several constraints) of $W'$ in order to break the MUC associated with $W'$.
This process loops until a solution is found for the constraint network associated with $W$.

\begin{function}[t!]
  \ForEach{$w \in \m{cons}(W)$} {
    $f[w]\gets 0$ \;
   }
  \Repeat{$sol\ne\bot$} {
    $P \gets \m{toCN}_\le(W,f)$ \;
    $sol \gets \m{solveCN}(P)$ \;
    \eIf{$sol \neq \bot$} {
      \Return sol \;
    }
    {
      $M \gets \m{extractMUC}(P)$ \;
      $W' \gets \m{restrict}(W,M)$ \;
      $f \gets \m{relax}(W',f)$ \;
   }
  }

  \caption{incompleteSearch($W$: WCN)}
  \label{alg:gmr}

\end{function}
 \begin{function}[t]
   $Q_{loc} \gets \{f \}$ \;
   \While{$Q_{loc} \neq \varnothing$} {
      $f \gets$ pick and delete least cost front in $Q_{loc}$ \;
      $P \gets \m{toCN}_\le(W,f)$ \;
     \uIf{$\m{solveCN}(P) \neq \bot$} {
       \Return{$f$} \;
     }
     \Else {
       $M \gets \m{extractMUC}(P)$ \;
       \ForEach{$w \in \m{cons}(W,M)$} {
         \If{$f[w] \ne nbLayers(w)-1$} {
           $f' \gets f$ \;
            $f'[w] \gets f'[w] + 1$ \;
           \If{$\m{cost}(f') \ne k$} {
             $Q_{loc} \gets Q_{loc} \cup \{f'\}$ \;
           }
         }
       }
     }
   }
   \caption{ relax($W$: WCN, f: front): front}
   \label{alg:h2relax}
 \end{function}
 
%
%
%
%
%
%

Function $relax(W,f)$ updates the front $f$ in order to allow new layers. The idea is to increment $f[w]$ for at least one constraint $w$, 
in such a way as to make the MUC satisfiable. In other words, for at least one of the constraints, we accept an increase of cost for this constraint.
In the greedy approach used here, we keep doing this until the MUC is broken.

Algorithm \ref{alg:h2relax} uses a local priority queue  $Q_{loc}$ which is initialized with the front $f$.
While $Q_{loc}$ is not empty (line $2$), we extract from the queue the front $f$ having the lowest cost $cost(f)$.
Function $\m{toCN}_\le$ builds a constraint network from both the WCN and the front $f$ previously selected (line $4$).
If this constraint network is satisfiable, $f$ initially given in parameter is updated and returned by the algorithm \ref{alg:h2relax}.
On the contrary, if the constraint network is unsatisfiable, it means that the relaxation is not sufficient and then the algorithm updates the queue of fronts by generating all the neighbors of $f$. 
For each constraint belonging to the WCN (obtained from the MUC), we generate a new front,
different from the initial one, by incrementing the layer of one single constraint.

Note that in the different calls to the function \ref{alg:h2relax}, it is sufficient to work only on a subset of the constraints.
Indeed, the constraints that can be relaxed are those belonging to the WCN $W$ associated with the MUC ($restrict(W,M)$).
In practice, we only work with the subset of the front which corresponds to contraints of the MUC, which saves space (for simplicity, this is not detailed in the algorithm).

In the incomplete version, we transform a WCN into a CN using function  $\m{toCN}_\le$ instead of $\m{toCN}_=$ which is used in the complete version.
Indeed, in the complete version, all possible relaxations of a MUC are considered and the fronts are enumerated in increasing cost order. Hence, when we test the satisfiability
of a CN associated with a front $f$, we know that all the CN associated with fronts $f'$ such that $f'\rightarrow_* f$ have already been proved unsatisfiable.
In the complete version, $\m{toCN}_=$ extracts a CN composed of the current layers of a front.
In the incomplete version, all possible relaxations of a MUC are not tried. Therefore, we cannot ensure that, when we test the satisfiability of a CN associated with a front $f$,
all the CNs associated with fronts $f'$ such that $f'\rightarrow_* f$ have been tried and already proved unsatisfiable.
Function $\m{toCN}_\le$ therefore extracts a CN from both the current and inferior layers of a front.

This approach doesn't guarantee the identification of an optimum solution. The reason is that all the relaxations of MUCs are not considered, contrary to the complete version 
(see Example \ref{contreExempleRelachementUnique}). Indeed, we only identify the first relaxation which restores satisfiablity of a MUC. 

\section{Experimental Results} \label{sec:experimental}

In order to show the practical interest of our approach, we have performed experiments using a computer with processors Intel(R) Core(TM) i7-2820QM CPU 2.30GHz.
Our greedy approach, noted $\m{GMR}$, has been compared with two complete approaches with cost transfer algorithms enforcing EDAC, proposed by both our solver $AbsCon$ and the solver $ToulBar2$.
A time-out of $600$ seconds was set per instance.
The total CPU time necessary to solve each instance is given as well as the upper bound found (UB).
If the execution of the algorithm is not yet finished before the time-out (CPU time greater than $600$ seconds), the bound found at the end of the $600$ seconds is given.
Table \ref{tab:spot5} provides the results obtained for the serie $spot5$ (except trivial instances) composed of constraints of arity less than or equal to $3$.
Table \ref{tab:celar}  provides the results obtained for the serie $celar$.

\begin{table}[h!]
\begin{center}
{\small
\begin{tabular}{ccrrr}
  \multicolumn{2}{c}{~ }  & \multicolumn{2}{c}{AbsCon}   & \multicolumn{1}{c}{ToulBar2}  \\

$Instances$& & \multicolumn{1}{c}{GMR}  & \multicolumn{1}{c}{EDAC}
 &\multicolumn{1}{c}{EDAC} \\
\toprule

\vspace{-0.4cm} \multirow{1}{3cm}{ }  &\multirow{1}{1cm}{ }&\multirow{1}{1.7cm}{ }&\multirow{1}{1.7cm}{ }&\multirow{1}{1.7cm}{ } \\

\rowcolor{vlgray} $spot5$-$42$ & CPU & $9.18$ & $>600$ & $>600$ \\
 & UB & $ {\bf 161,050}$ &$ {\bf 161,050}$ &  ${\bf 161,050}$ \\
\midrule
\rowcolor{vlgray}  $spot5$-$404$ & CPU & $4.99$ & $>600$  &$217$ \\
  & UB & $118$ & ${\bf 114}$   &${\bf 114}$ \\
\hline
\rowcolor{vlgray} $spot5$-$408$ & CPU & $9.85$ & $>600$  &$>600$ \\
  & UB & ${\bf 6,235}$ & $8,238$  & $6,240$ \\
\midrule
\rowcolor{vlgray} $spot5$-$412$ & CPU & $18.8$ & $>600$ &$>600$ \\
  & UB & ${\bf 33,403}$ & $43,390$  &$37,399$ \\
\midrule
\rowcolor{vlgray} $spot5$-$414$ & CPU & $37.7$ & $>600$  &$>600$ \\
  & UB & ${\bf 40,500}$ & $56,492$  &$52,492$ \\
\midrule
\rowcolor{vlgray}$spot5$-$503$ & CPU & $6.33$ & $>600$  &$>600$ \\
  & UB & $12,125$ & $13,119$  &${\bf 12,117}$ \\
\midrule
\rowcolor{vlgray} $spot5$-$505$ & CPU & $12$ & $>600$  &$>600$ \\
 & UB & ${\bf 22,266}$ & $28,258$ &$25,268$ \\
\midrule
\rowcolor{vlgray}$spot5$-$507$ & CPU & $22.3$ & $>600$  &$>600$ \\
  & UB & ${\bf 30,417}$ & $37,429$  &$37,420$ \\
\midrule
\rowcolor{vlgray} $spot5$-$509$ & CPU & $32.2$ & $>600$ &$>600$ \\
  & UB & ${\bf 37,469}$ & $48,475$  &$46,477$ \\
\midrule
\rowcolor{vlgray}$spot5$-$1401$ & CPU & $76.5$ & $>600$ &$>600$ \\
  & UB & ${\bf 483,109}$ & $513,097$ &$516,095$ \\
\midrule
\rowcolor{vlgray} $spot5$-$1403$ & CPU & $142.5$ & $>600$  &$>600$ \\
  & UB & ${\bf 481,266}$ & $517,260$  &$507,265$ \\
\midrule
\rowcolor{vlgray}$spot5$-$1407$ & CPU & $552.6$ & $>600$  &$>600$ \\
  & UB & ${\bf 492,614}$ & $517,623$ &$507,633$ \\
\midrule
\rowcolor{vlgray} $spot5$-$1502$ & CPU & $3.91$ & $0.98$  &$0.02$ \\
 & UB & $28,044$ &$ {\bf 28,042}$ &${\bf 28,042}$ \\
\midrule
\rowcolor{vlgray}$spot5$-$1504$ & CPU & $67.6$ & $>600$ &$>600$ \\
  & UB & ${\bf 175,311}$ & $204,314$   &$198,318$ \\
\midrule
\rowcolor{vlgray} $spot5$-$1506$ & CPU & $338$ & $>600$ &$>600$ \\
  & UB & ${\bf 378,551}$ & $426,551$  &$399,568$ \\
\bottomrule
\end{tabular}
}
\end{center}
\caption{CPU time in seconds and bound found before the time-out for instances \p{spot5}. \label{tab:spot5}}

\end{table}

\begin{table}[h!]
\begin{center}
{\small
\begin{tabular}{ccrrr}
  \multicolumn{2}{c}{~ }  & \multicolumn{2}{c}{AbsCon}   & \multicolumn{1}{c}{ToulBar2}  \\

$Instances$& & \multicolumn{1}{c}{GMR}  & \multicolumn{1}{c}{EDAC}
 &\multicolumn{1}{c}{EDAC} \\
\toprule

\vspace{-0.4cm} \multirow{1}{3cm}{ }  &\multirow{1}{1cm}{ }&\multirow{1}{1.7cm}{ }&\multirow{1}{1.7cm}{ }&\multirow{1}{1.7cm}{ } \\

\rowcolor{vlgray} $graph$-$05$ & CPU & $16.6$ & $>600$   & $0.62$ \\
  & UB & ${\bf 221}$ & $4,645$ & ${\bf 221}$ \\
\midrule
\rowcolor{vlgray} $scen$-$06$ & CPU & $88.5$ & $>600$ &$485.4$ \\
  & UB & $3,616$ & $12,013$ &${\bf 3,389}$ \\
\midrule
\rowcolor{vlgray} $scen$-$06$-$16$ & CPU & $60.6$ & $>600$ & $237.7$ \\
  & UB & $4,149$ & $11,286$  &${\bf 3,277}$ \\
\midrule
\rowcolor{vlgray} $scen$-$06$-$18$ & CPU & $59.2$ & $>600$ & $102.4$ \\
  & UB & $3,640$ & $8,723$  &${\bf 3,263}$ \\
\midrule
\rowcolor{vlgray} $scen$-$06$-$20$ & CPU & $68.5$ & $>600$ & $67.9$ \\
  & UB & $3,402$ & $8,594$  &${\bf 3,163}$ \\
\midrule
\rowcolor{vlgray} $scen$-$06$-$30$ & CPU & $32.6$ & $177.7$ & $1.10$ \\
  & UB & $2,208$ & ${\bf 2,080}$  &${\bf 2,080}$ \\
\midrule
\rowcolor{vlgray} $scen$-$07$ & CPU & $209.9$ & $>600$  &$>600$ \\
  & UB & ${\bf 426,423}$ & $31,230K$  &$505,731$ \\
\midrule
\rowcolor{vlgray} $scen$-$07$\_$10000$\_$30r$ & CPU & $28.7$ & $>600$ &$>600$ \\
  & UB & ${\bf 270K}$ & $17,000K$  &$1,500K$ \\
\midrule
\rowcolor{vlgray} $celar6$-$sub2$ & CPU & $23.3$ & $>600$  &$4.74$ \\
  & UB & $2,927$ & ${\bf 2,746}$ &${\bf 2,746}$ \\
\midrule
\rowcolor{vlgray} $celar6$-$sub3$ & CPU & $26.6$ & $>600$  &$15.7$ \\
  & UB & $3,271$ & $3,279$ &${\bf 3,079}$ \\
\midrule
\rowcolor{vlgray} $celar6$-$sub4$ & CPU & $41.5$ & $>600$  &$28.7$ \\
  & UB & $3,704$ & $5,178$ &${\bf 3,230}$ \\
\midrule
\rowcolor{vlgray} $celar7$-$sub2$ & CPU & $60.8$ & $>600$  &$17.8$ \\
  & UB & $283,955$ & $ 252,436$ &${\bf 173,252}$ \\
\midrule
\rowcolor{vlgray} $celar7$-$sub3$ & CPU & $48.7$ & $>600$  &$93.4$ \\
  & UB & $414,161$ & $1,342K$ &${\bf 203,460}$ \\
\midrule
\rowcolor{vlgray} $celar7$-$sub4$ & CPU & $57$ & $>600$  &$221$ \\
  & UB & $272,945$ & $ 302,541$ &${\bf 242,443}$ \\
\bottomrule

\end{tabular}
}
\end{center}
\caption{CPU time in seconds and bound found before the time-out for instances \p{celar}. \label{tab:celar}}

\end{table}

On instances of $spot5$, we observe that our approach provides better bounds than those obtained by the two complete approaches, except for the instances $spot5$-$404$, $spot5$-$503$ and $spot5$-$1502$.
Beyond the fact that our approach is not complete, this can be explained by the fact that these instances are relatively easy (involving less than $1,400$ constraints) and cost transfer algorithms
reach a better bound before the time-out. However, it is interesting to note that the bound found by our approach is not so different from these obtained by complete approaches,
and this in a very short time. About other instances, considered as more difficult (in particular, more than $13,000$ constraints for the instances $spot5$-$1403$, $spot5$-$1407$ and $spot5$-$1506$),
we observe that our greedy approach provides a better bound than those of complete approaches in a time shorter than the time-out.

One can note that $\m{GMR}$ is not as competitive as $ToulBar2$ on some instances of $celar$, $scen$ and $graph$.
This is partly due to the fact that $ToulBar2$ uses a constraint decomposition technique. 
Indeed, comparing $\m{GMR}$ with a classical cost transfer algorithm (such as in our $AbsCon$ solver), the approach described in this paper outperforms the EDAC version of $AbsCon$.
However, on the $graph$-$05$ instance, even if $\m{GMR}$ is not so fast than $ToulBar2$, our approach is able to find the optimum upper bound.   

 
The preliminary experiments we have conducted on the complete approach show that our current implementation is not yet competitive. It clearly appears that MUCs must be identified in an incremental way so that the computational effort for a given front benefits to the others. We are currently exploring the use of dynamic CSP algorithms to incrementally test the satisfiability of CNs and identify MUCs.

\section{Conclusion}

In this paper, we have proposed an original approach for solving weighted constraint networks through successive resolutions of hard constraint networks.
More precisely, CNs are obtained from WCNs by selecting tuples with a given cost.
These CNs are enumerated in increasing cost order until a solution is found.
To improve the complexity in practice of our approach, we identify a minimal unsatisfiable core for each unsatisfiable constraint network, in order to focus the cost increase on the sole constraints of the MUC.
The approach is declined in both a complete and an incomplete, greedy algorithm.
We have shown that our method based on MUCs extraction can be used in practice: the greedy algorithm obtains results which are comparable with other state of the art approaches.
However, the complete algorithm is not yet competitive but we have identified several reasons.
We are working on a new version using dynamic CSP algorithms in order to perform incremental computations.
To conclude, we would like to emphasize that the proposed method allows a simple and natural integration of the CSP, (weighted) Max-CSP and WCSP frameworks.
This can be done by generalizing in a straightforward way the construction of CNs generated during the resolution.

\section*{Acknowledgments}

This work has been supported by both CNRS and OSEO within the ISI project 'Pajero'.



\end{document}